\newif\ifarxiv
\arxivtrue

\ifarxiv
   \documentclass[onecolumn]{article}
\else
    \documentclass[1p]{elsarticle}
    \journal{Information Sciences}
\fi

\usepackage{epsfig}
\usepackage{booktabs}
\usepackage{amssymb,amsmath,amsthm,multirow}
\usepackage{amsfonts,graphicx,subfigure}
\usepackage{float}
\usepackage{caption}
\usepackage{subcaption}
\usepackage{algorithm}
\usepackage{algpseudocode}
\usepackage{booktabs}
\usepackage{multirow}
\usepackage{rotating}
\usepackage{hyperref}
\newtheorem{theorem}{Theorem}
\newtheorem{cor}{Corollary}
\newtheorem{lemma}{Lemma}

\newtheorem{dfn}{Definition}

\newcommand{\paperabstract}{The $k$-means++ algorithm is commonly restarted multiple times to avoid poor local optima, yet the number of restarts is almost always chosen arbitrarily and applied uniformly regardless of data set difficulty. This undermines any comparison relying on such a choice and wastes computation on easy data sets while potentially under-serving hard ones. Here, we introduce the Good-Turing Restart Criterion (GTRC). This combines a Good-Turing estimate, a proven unconditional bound, and a confidence-based bound on the probability that a further restart would improve on the current result, stopping once this probability falls below a user-specified tolerance $\varepsilon$. Our experiments on 34 real-world data sets show that GTRC identifies the point beyond which further $k$-means++ restarts yield only negligible improvement, achieving a more favourable balance between the number of restarts used and clustering quality than three existing stopping rules for multistart local search and popular fixed restart counts. \textbf{Software: }\href{https://github.com/RCdeAmorim/Good-Turing-Restart-Criterion}{https://github.com/RCdeAmorim/Good-Turing-Restart-Criterion}.}

\begin{document}

\ifarxiv
    \title{An interpretable Good--Turing restart criterion for $k$-means++}
    \author{Renato Cordeiro de Amorim\thanks{School of Computer Science and Electronic Engineering, University of Essex, Colchester, UK. r.amorim@essex.ac.uk}} 
    \date{}
    \maketitle
    \begin{abstract}
    \paperabstract
    \end{abstract}
    \vspace{1em}
    \noindent \textbf{Keywords:} $k$-means++, restart criterion, Good-Turing estimation.
\else
    \begin{frontmatter}
    \title{An interpretable Good--Turing restart criterion for $k$-means++}
    \author{Renato Cordeiro de Amorim}
    \ead{r.amorim@essex.ac.uk}
    \address{School of Computer Science and Electronic Engineering, University of Essex, Wivenhoe, UK.}
    \begin{abstract}
    \paperabstract
    \end{abstract}
    \begin{keyword}
    $k$-means++, restart criterion, Good-Turing estimation.
    \end{keyword}
    \end{frontmatter}
\fi

\section{Introduction}

Clustering algorithms follow the unsupervised learning framework, and by consequence do not require labelled samples to learn from. Hence, such algorithms have become one of the main tools in exploratory data analysis, and have been applied to a number of areas such as bioinformatics, cybersecurity, quantitative finance, and computer vision, among others~\cite{ran2023comprehensive,oyewole2023data,wu2022construction,huang2024deep,lopez2025biclustering,lin2022clustering}.

The $k$-means++ algorithm~\cite{arthur2007k} remains, arguably, the most popular clustering algorithm in use today~\cite{ikotun2023k}. Given a data set and a number of clusters $k$, $k$-means++ produces a partition of $\mathcal{X}$ by iteratively minimising a within-cluster sum of squares objective. A well-known weakness of $k$-means++ is that it is not guaranteed to reach the global minimum of its objective, as it is sensitive to the choice of initial centroids (a weakness inherited from $k$-means, see Section \ref{sec:clustering}). To mitigate this, it is common practice to restart $k$-means++ multiple times and select as final partition that with the lowest objective value.

This practice is so widespread that it has become the default behaviour in popular software packages such as scikit-learn~\cite{scikit-learn}. However, the number of such restarts is invariably chosen arbitrarily. Some studies use ten restarts, others twenty, and others still one hundred. And potentially even more problematic: after the number of restarts is selected, it is usually applied to all data sets regardless of their structure. When a fixed number of restarts is applied indiscriminately across data sets of varying complexity, the resulting baseline is either wasteful on easy data sets or inadequate on hard ones, potentially undermining the validity of any comparison that relies on it. 

To our knowledge, there is no well established method providing a principled basis for the choice of restarts. In this paper, we address this problem directly by introducing the Good--Turing restart criterion (GTRC), a theoretically-grounded, data-adaptive stopping rule with a clear probabilistic interpretation. It halts when the estimated probability that a further restart would improve on the best partition found so far falls below a user-specified tolerance. This makes the criterion not only adaptive to the complexity of each data set, but also transparent and interpretable. Our experiments on 34 real-world data sets show that GTRC identifies the point beyond which further $k$-means++ restarts yield only negligible improvement, achieving a more favourable balance between the number of restarts used and clustering quality than three existing stopping rules for multistart local search and popular fixed restart counts.

\section{Background}
\label{sec:background}

In this section we review the prior work this paper draws on. We first summarise the $k$-means and $k$-means++ algorithms and the role of restarts in mitigating their sensitivity to initialisation. We then review existing stopping rules for multistart local search. Finally we review Good--Turing estimation, the statistical tool underlying the restart criterion proposed in Section~\ref{sec:gt_kmeans}.

\subsection{Clustering}
\label{sec:clustering}
Let $\mathcal{X} = \{x_1, \ldots, x_n\} \subset \mathbb{R}^d$ be a data set and $k$ be the number of clusters, with $k < n$. The $k$-means algorithm \cite{macqueen1967some} generates a partition $\mathcal{C} = (C_1, \ldots, C_k)$ of $\mathcal{X}$ minimising
\begin{equation}
\label{eq:kmeans}
    \phi = \sum_{l=1}^k \sum_{x_i \in C_l} \sum_{v=1}^d (x_{iv} - \mu_{lv})^2, 
\end{equation}
where 
\[
\mu_{lv} = \frac{1}{|C_l|}\sum_{x_i \in C_l} x_{iv}.
\]
This minimisation follows three straightforward steps: (i) Select $k$ data points from $\mathcal{X}$, uniformly at random, and copy their values to \(\mu_1, \ldots, \mu_k\); (ii) for each \(x_i \in \mathcal{X}\), identify its nearest centroid \(\mu_l\) and assign \(x_i\) to \(C_l\); (iii) update each centroid \(\mu_l\) to the component-wise mean over all data points assigned to \(C_l\). Steps (ii) and (iii) are repeated until convergence.

Although very popular, $k$-means is not without drawbacks (for further details, see~\cite{ahmed2020k,liu2023transforming}). Here, we are particularly interested in the fact that $k$-means is not guaranteed to find the global minimum of~\eqref{eq:kmeans}, as the quality of the final partition, $\mathcal{C}$, is heavily influenced by the choice of initial centroids. The major issue is that these initial centroids are found at random.

The above led to a considerable amount of research focusing on finding good initial centroids for $k$-means, or at least increasing the probability of doing so~\cite{harris2022extensive,franti2019much, de2012empirical}. Among these, $k$-means++~\cite{arthur2007k} is certainly the most popular approach. $k$-means++ is the default initialisation in popular software packages such as scikit-learn, MATLAB, and R~\cite{scikit-learn,matlab, r_language} and has been shown both theoretically and empirically to produce better clusterings than random initialisation. Rather than selecting all $k$ initial centroids uniformly at random, $k$-means++ selects them sequentially. The first centroid $\mu_1$ is chosen uniformly at random from $\mathcal{X}$. Each subsequent centroid $\mu_l$, for $l = 2, \ldots, k$, is then chosen from $\mathcal{X}$ with probability proportional to
\begin{equation}
\label{eq:kmeanspp}
    P(x_i) = \frac{D(x_i)^2}{\sum_{x_j \in \mathcal{X}} D(x_j)^2},
\end{equation}
where $D(x_i)$ denotes the distance from $x_i$ to its nearest already-selected centroid. This encourages a spread of initial centroids 
across the data set, reducing the likelihood of a poor initialisation. Once all $k$ centroids have been selected, $k$-means++ proceeds 
identically to $k$-means.

\subsection{The restart problem}
\label{sec:restart_problem}

Despite the improvements offered by $k$-means++, there is still no guarantee it will find the global minimum of~\eqref{eq:kmeans}. It is therefore common practice to run $k$-means++ multiple times and retain the partition with the lowest value of $\phi$. The number of such restarts, however, is typically chosen arbitrarily and applied uniformly across all data sets under study, regardless of their complexity. As we shall show (see Section \ref{sec:results}), the number of restarts required may vary considerably across data sets, and a principled, data-adaptive criterion for determining when to stop is therefore both natural and necessary.

Perhaps the most systematic study of the effect of restarts on $k$-means performance is that of Fr\"{a}nti and Sieranoja~\cite{franti2019much}. They show that repeating $k$-means $100$ times reduces the proportion of erroneous clusters from approximately $15\%$ to $6\%$, on average, across their benchmark (and further down to 1\% when using the Maxmin initialisation). However, they also observe that this improvement is heavily data-dependent. On data sets with high cluster overlap, $k$-means converges reliably and few restarts suffice, whereas on data sets with well-separated clusters, even a large number of restarts may not suffice. Crucially, they acknowledge that the improvement probability $p$ is unknown in practice and so they could not automatically derive the number of restarts ($r$), leaving $r = 100$ without principled justification.

This lack of a principled stopping rule is pervasive across the broader literature. There are clear examples (including our own previous work) of extensive benchmarks of $k$-means variants and initialisation strategies across synthetic and real-world data sets adopting $r = 50$ restarts as the default for stochastic methods, without justification for this choice~\cite{vouros2021empirical,harris2022extensive}. Recent studies proposing new $k$-means variants suffer from the same issue, often comparing their new algorithm against $k$-means++, and others, using $10$ or $20$ restarts ~\cite{capo2020efficient}. Taken together, these studies illustrate not merely that different authors choose different restart counts, but that within any given study the same count is applied uniformly across data sets of very different complexity.

The consequences of the arbitrary choice of $r$ extend beyond inconsistency. When $k$-means++ with a fixed $r$ is used as a baseline in a comparative study, the outcome of the comparison is implicitly sensitive to that choice. On an easy data set, even a small $r$ suffices to find a near-optimal partition, and a proposed algorithm need only outperform a competent baseline. On a hard data set, an insufficient $r$ produces a weak baseline, making it easier for any competitor to appear superior (not because the competitor is genuinely better, but because the baseline was undersupplied with restarts). Conversely, an excessively large $r$ on an easy data set wastes computation without improving the baseline. In either case, the reader has no way to determine which regime applies, since the choice of $r$ is reported but never justified. We address this gap by proposing an interpretable, theoretically grounded criterion for determining when enough $k$-means++ restarts have been performed.

\subsection{Stopping rules for multiple restarts}
\label{sec:stopping_rules_review}

Stopping rules for multistart local search have been studied largely in the context of continuous, box-constrained optimisation rather than clustering. Some of this work pursues a different goal from the one considered here. For instance, Lagaris and Tsoulos~\cite{lagaris2008stopping} proposed three rules aimed at certifying that most or all local optima of a function have been recovered. These rules do not apply in our context, since we are only interested in finding local minima that are better than those already found. Gy\"{o}rgy and Kocsis~\cite{gyorgy2011efficient} proposed a strategy capable of running many $k$-means runs concurrently, deciding whether each run should compute its next iteration. These runs can be seen as different restarts, and their algorithm decides how much computing power each should take. However, the authors point out that a single $k$-means run usually converges in so few iterations that this strategy has little opportunity to make that choice before each run is already finished.

We now turn to three rules deciding when to stop restarts given the clustering results obtained so far. These span old and recent proposals, adaptive and non-adaptive designs, and distinct theoretical foundations, and form the basis of our empirical comparison in Section~\ref{sec:results}.

Hart's rule~\cite{hart1998sequential} bounds $P(\phi^{(r)} < \min(\phi^{(1)}, \ldots, \phi^{(r-1)}))$ using a count $\hat\rho_r(\epsilon)$ of prior restarts within a tolerance $\epsilon$ of the best value found so far, giving a probability estimate $\hat\rho_r(\epsilon)/r$, combined with a normal-approximation confidence term governed by a parameter $\delta$. This count is built recursively from the sequence of successive improvements in $\phi$. The rule stops at the first $r$ for which
\[
\Phi\!\left(2\delta\sqrt{r}\right) - \Phi\!\left(-2\delta\sqrt{r}\right) - \left(1 - \frac{\hat\rho_r(\epsilon)}{r}\right)^{r} \;\geq\; 1-\beta,
\]
where $\Phi$ is the standard normal cumulative distribution function, and $\beta$ is the maximum acceptable probability that the rule stops before observing an improvement.

Ohsaki and Yamakawa's rule~\cite{ohsaki2018stopping} estimates the likelihood ratio between the hypotheses that all local optima have been found and that one further, undiscovered optimum remains, inferring the size of its attractor from those already observed. Let $\mathcal{C}^{(1)}, \ldots, \mathcal{C}^{(r)}$ denote the partitions found by restarts $1, \ldots, r$, and let $w$ denote the number of distinct partitions among them, with $n_1, \ldots, n_w$ the number of restarts converging to each. The likelihood ratio reduces to
\[
\frac{\bar{L}^\star_w}{L^{(0)}_w} = \frac{1}{w}\sum_{j=1}^{w}\left(1 - \frac{n_j}{r+n_j}\right)^{r},
\]
and the rule stops once this ratio falls below a threshold. 

Corominas's rule~\cite{corominas2023deciding} does not monitor the run in progress; instead, it computes a fixed number of restarts $r$ in advance. This $r$ is chosen so that, if the true probability of a restart improving on the current best were at least $\epsilon_C$, the probability of observing no improvement in $r$ restarts would be at most $\alpha$,
\[
r = \left\lceil \frac{\ln \alpha}{\ln(1-\epsilon_C)} \right\rceil,
\]
where $\epsilon_C$ is an assumed threshold on the improvement probability itself, and $\alpha$ is a failure tolerance.

\subsection{Good--Turing estimation}
\label{sec:good_turing_estimation}

The Good--Turing frequency estimation~\cite{good1953} addresses a deceptively simple problem. Given a sample drawn from a population of unknown composition, estimate the probability that the next draw will reveal something never seen before. Let a sample of size $r$ contain $\phi_1$ outcomes observed exactly once. Good, building on an idea attributed to Turing, showed that the expected total probability mass attributable to distinct outcomes observed in the sample is approximately $1 - \phi_1/r$, and consequently that the probability the next draw reveals a previously unseen outcome (the missing mass) is approximately $\phi_1/r$. This estimator is remarkable in that it requires no parametric assumptions about the underlying population. Nonetheless, it is not without flaws. The point estimate $\phi_1/r$ is itself a random quantity, subject to sampling variability which may be particularly pronounced when $r$ is small~\cite{mcallester2000convergence}.

Several refinements of the Good--Turing estimator have since been proposed, combining it with empirical-frequency estimates to obtain near-optimal accuracy across an entire distribution~\cite{orlitsky2015competitive,acharya2013optimal}. However, these target a different problem than ours. They attempt to estimate the probability of every possible outcome. This does not apply to our case, as we are not interested in the probability of every local minimum, which would in any case require knowing how many such minima exist. 

A stopping rule for $k$-means++ restarts built solely on the Good--Turing estimator risks halting prematurely, due to its
variability when $r$ is small, which we discussed above. To address this, in Section~\ref{sec:gt_kmeans} we adapt this estimator, alongside other components, to bound the probability that restart $r+1$ of $k$-means++ produces a new and better value of $\phi$.

\section{The Good--Turing Restart Criterion}
\label{sec:gt_kmeans}

This section presents our main contribution, the GTRC. We first develop the theoretical results on which the criterion rests, and then combine them into a practical, interpretable stopping rule for $k$-means++ restarts.

\subsection{Theoretical Framework}
\label{sec:theoretical_framework}

Let $\phi^{(r)}$ denote the value of \eqref{eq:kmeans} returned by the $r$-th independent run of $k$-means++ on a fixed data set $\mathcal{X}$ and cluster count $k$. We first establish that this sequence has the distributional structure needed for the results that follow.

\begin{lemma}
\label{lemma:iid}
The $k$-means++ objectives  $\phi^{(1)}, \ldots, \phi^{(r)}$, produced by independent runs with fixed \(\mathcal{X}\) and \(k\) are mutually independent and identically distributed.
\end{lemma}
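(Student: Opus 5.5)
The plan is to write each run of $k$-means++ as a fixed deterministic map applied to that run's own random seed, and then transfer independence and identical distribution from the seeds to the objectives. Fix $\mathcal{X}$ and $k$. A single run consumes randomness only in the seeding stage (the uniform choice of $\mu_1$ and the $D^2$-weighted choices of $\mu_2,\ldots,\mu_k$ in \eqref{eq:kmeanspp}). Model this randomness as a random element $U^{(j)}$ of a fixed probability space, for example a finite sequence of independent $\mathrm{Uniform}(0,1)$ variables used by inverse-CDF sampling. Once the initial centroids are determined, the Lloyd iterations (steps (ii) and (iii)) are deterministic, provided we fix a deterministic tie-breaking rule for equidistant centroids and a deterministic convention for empty clusters. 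The iterations also terminate in finitely many steps, because $\phi$ strictly decreases whenever the partition changes and there are only finitely many partitions of $\mathcal{X}$. Hence there is a single function $g_{\mathcal{X},k}$, the same for every run, with
\[
\phi^{(j)} = g_{\mathcal{X},k}\bigl(U^{(j)}\bigr), \qquad j=1,\ldots,r .
\]

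The key steps, in order, are as follows.

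\emph{Step 1.} State the modelling assumption that ``independent runs'' means the seeds $U^{(1)},\ldots,U^{(r)}$ are mutually independent and identically distributed. Identical distribution holds because the same procedure is applied each time to the same $\mathcal{X}$ and $k$.

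\emph{Step 2.} Show that $g_{\mathcal{X},k}$ is measurable. The seed determines one of finitely many tuples of initial centroids: at each selection step the chosen index is a step function of a uniform variable, with breakpoints given by cumulative $D^2$ weights that are themselves functions of the earlier choices. Lloyd's algorithm then maps each tuple deterministically to one of finitely many final partitions, and each partition gives one value of $\phi$. So $g_{\mathcal{X},k}$ is a composition of measurable maps taking finitely many values, and it is measurable.

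\emph{Step 3.} Apply the standard fact that measurable functions of independent random elements are independent. For Borel sets $B_1,\ldots,B_r$,
\[
P\Bigl(\textstyle\bigcap_j \{\phi^{(j)}\in B_j\}\Bigr)=P\Bigl(\textstyle\bigcap_j \{U^{(j)}\in g^{-1}(B_j)\}\Bigr)=\textstyle\prod_j P\bigl(U^{(j)}\in g^{-1}(B_j)\bigr).
\]
Since the $U^{(j)}$ share a law, $P(\phi^{(j)}\in B)=P(U^{(1)}\in g^{-1}(B))$ does not depend on $j$, which gives identical distribution.

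The main obstacle is not the probability theory, which is routine, but making Step 1 and the determinism claim precise. One issue is that implementations use pseudo-random generators, so independence is an idealisation; I will state this explicitly as an assumption about the seeds. The other is ensuring that no run depends on another, for instance through warm starts or shared state. Ties and empty clusters must also be handled deterministically, since otherwise $g_{\mathcal{X},k}$ would not be well defined; fixing conventions as in Step 2 resolves this. A side benefit of the construction is that $\phi^{(j)}$ takes finitely many values, which the later Good--Turing arguments over distinct local minima will presumably use.
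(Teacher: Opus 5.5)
Your proposal is correct and follows the same route as the paper. Both write $\phi^{(j)}$ as one fixed deterministic function $g$ of i.i.d.\ per-run randomness and carry independence and identical distribution through $g$; the only differences are that the paper takes the initial centroid set $M^{(i)}$ as the i.i.d.\ input rather than the raw seed $U^{(j)}$, and it leaves implicit the measurability, termination and tie-breaking conventions that you spell out.
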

\begin{proof}
Let $M^{(i)}$ denote the set of initial centroids selected by run $i$. Since $\mathcal{X}$ and $k$ are fixed, every run draws $M^{(i)}$ from
the same distribution, determined by the $k$-means++ probability rule~\eqref{eq:kmeanspp}, and distinct runs draw independently of one another. Hence $M^{(1)}, \ldots, M^{(r)}$ are mutually independent and identically distributed.

Given its initial centroids, the $k$-means iteration is a deterministic procedure, so there exists a fixed function $g$, depending only on $\mathcal{X}$ and $k$, such that $\phi^{(i)} = g(M^{(i)})$ for every $i$. Applying the same fixed function to i.i.d.\ random variables preserves both mutual independence and the common distribution, so $\phi^{(1)}, \ldots, \phi^{(r)}$ are mutually independent and identically distributed.
\end{proof}

With the above established, we can formalize what it means for a restart to improve on the results obtained so far.

\begin{dfn}
Run $r$ is an \emph{improvement} if $\phi^{(r)} < \min(\phi^{(1)}, \ldots, \phi^{(r-1)})$ for $r \geq 2$. Run $1$ is an improvement by convention.
\end{dfn}

Let $m$ denote the number of distinct values of $\phi$ attainable upon convergence of $k$-means++ on $(\mathcal{X}, k)$. We now establish a lower bound on the probability of ties between runs, in terms of $m$.

\begin{lemma}
\label{lemma:ties_prob_is_positive}
For any two runs $i \neq j$, 
\[
P(\phi^{(i)} = \phi^{(j)}) \geq \frac{1}{m}.
\]
\end{lemma}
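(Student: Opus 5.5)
By Lemma~\ref{lemma:iid}, $\phi^{(i)}$ and $\phi^{(j)}$ are independent and share a common distribution. That distribution is supported on the finite set of attainable converged values, which we denote $\{v_1,\ldots,v_m\}$. There are finitely many because there are finitely many partitions of $\mathcal{X}$ into $k$ clusters, and $\phi$ is a function of the partition. We write $p_s = P(\phi^{(1)} = v_s)$, so that $p_s \ge 0$ and $\sum_{s=1}^m p_s = 1$.

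Conditioning on the common value and using independence, we obtain
\[
P(\phi^{(i)} = \phi^{(j)}) = \sum_{s=1}^m P(\phi^{(i)}=v_s)\,P(\phi^{(j)}=v_s) = \sum_{s=1}^m p_s^2 .
\]
Independence is what lets the joint probability factor, and identical distribution is what makes both factors equal $p_s$.

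It remains to bound the collision probability $\sum_s p_s^2$ from below. By the Cauchy--Schwarz inequality applied to the vectors $(p_1,\ldots,p_m)$ and $(1,\ldots,1)$,
\[
1 = \Bigl(\sum_{s=1}^m p_s\Bigr)^2 \le m \sum_{s=1}^m p_s^2 ,
\]
which gives $\sum_s p_s^2 \ge 1/m$. Equality holds exactly when the distribution is uniform over the $m$ values.

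No step is a real obstacle. The only point needing care is to fix precisely what $m$ counts. It must be the number of values receiving positive probability, or at least a number no smaller than that count. If $m$ counts values that are attainable but have probability zero, the bound still holds, since zero entries only shrink the effective support size $m' \le m$, and then $1/m' \ge 1/m$. It is also worth noting explicitly that $m$ is finite, which the partition-counting argument above guarantees.
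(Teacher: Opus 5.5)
Your proof is correct and follows the same route as the paper: independence and identical distribution (Lemma~\ref{lemma:iid}) give $P(\phi^{(i)}=\phi^{(j)})=\sum_s p_s^2$. The paper then gets $\sum_s p_s^2\ge 1/m$ by expanding $\sum_s (p_s-1/m)^2\ge 0$, which is your Cauchy--Schwarz step written out by hand. Your remarks on why $m$ is finite and on values of probability zero are sound points that the paper leaves implicit.
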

\begin{proof}
Let $p_s$ denote the probability that a run converges to the $s$-th local minimum, for $s = 1, \ldots, m$. Hence,
\begin{align*}
P(\phi^{(i)} = \phi^{(j)}) &= \sum_{s=1}^{m} P(\phi^{(i)} = \phi^{(j)} = s\text{-th minimum}) \\
& \stackrel{\text{Lemma \ref{lemma:iid}}}{=}\sum_{s=1}^{m} P(\phi^{(i)} = s\text{-th minimum}) P(\phi^{(j)} = s\text{-th minimum})=\sum_{s=1}^{m} p_s^2.
\end{align*}
Note that \(\sum_{s=1}^{m}\left(p_s - \frac{1}{m}\right)^2 \geq 0\), which we expand to
\[
\sum_{s=1}^{m} p_s^2 - \frac{2}{m}\sum_{s=1}^{m} p_s + \sum_{s=1}^{m}\frac{1}{m^2} \geq 0.
\]
Given $\sum_{s=1}^{m} p_s = 1$ and $\sum_{s=1}^{m}\frac{1}{m^2} 
= \frac{1}{m}$, the above leads to
\[
\sum_{s=1}^{m} p_s^2 - \frac{2}{m} + \frac{1}{m} \geq 0.
\]
Hence $P(\phi^{(i)} = \phi^{(j)}) = \sum_{s=1}^{m} p_s^2 \geq 1/m$, 
with equality if and only if $p_s = 1/m$ for all $s$.
\end{proof}

This pairwise bound extends naturally to the probability that all $r$ runs agree.

\begin{lemma}
\label{lemma:all_runs_equal}
For any $r \geq 2$,
\[
P(\phi^{(1)} = \ldots = \phi^{(r)}) \geq \left(\frac{1}{m}\right)^{r-1}.
\]
\end{lemma}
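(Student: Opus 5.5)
The plan is to reduce the event that all $r$ runs coincide to a sum over the $m$ attainable local minima, and then bound that sum from below with a power-mean (Jensen/Hölder) inequality, mirroring the argument in Lemma~\ref{lemma:ties_prob_is_positive}. As there, let $p_s$ denote the probability that a single run converges to the $s$-th local minimum, so that $\sum_{s=1}^m p_s = 1$.

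\textbf{Step 1 (decomposition).} Since the $m$ values are distinct, the event $\{\phi^{(1)} = \ldots = \phi^{(r)}\}$ is the disjoint union over $s$ of the events that every run attains the $s$-th minimum. By Lemma~\ref{lemma:iid}, the runs are independent with common distribution, so
\[
P(\phi^{(1)} = \ldots = \phi^{(r)}) = \sum_{s=1}^m \prod_{i=1}^r P(\phi^{(i)} = s\text{-th minimum}) = \sum_{s=1}^m p_s^{r}.
\]

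\textbf{Step 2 (lower bound).} We must show $\sum_s p_s^r \ge m^{1-r}$ whenever $\sum_s p_s = 1$. The function $t \mapsto t^r$ is convex on $[0,\infty)$ for $r \ge 2$. Jensen's inequality with uniform weights $1/m$ therefore gives
\[
\frac{1}{m}\sum_{s=1}^m p_s^r \ge \left(\frac{1}{m}\sum_{s=1}^m p_s\right)^r = m^{-r}.
\]
Multiplying by $m$ yields $\sum_s p_s^r \ge m^{1-r} = (1/m)^{r-1}$, as required. Equivalently, one can apply Hölder's inequality $\sum_s p_s \cdot 1 \le (\sum_s p_s^r)^{1/r} m^{(r-1)/r}$ and raise both sides to the power $r$. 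Equality holds exactly when $p_s = 1/m$ for all $s$, consistent with the case $r=2$ in Lemma~\ref{lemma:ties_prob_is_positive}.

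\textbf{Main obstacle.} The only step needing care is Step 2, since the pairwise result of Lemma~\ref{lemma:ties_prob_is_positive} does not extend to $r$ runs merely by multiplying pairwise bounds: the pairwise coincidence events are not independent. I would therefore not try an induction on $r$ built on Lemma~\ref{lemma:ties_prob_is_positive}, and would use convexity directly instead.
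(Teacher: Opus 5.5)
Your proof is correct, but the convexity step differs from the paper's. You both reduce the event to $\sum_{s=1}^m p_s^r$. The paper then writes this as $\sum_s p_s\cdot p_s^{r-1}$ and applies Jensen with the weights $p_s$ to the convex map $x\mapsto x^{r-1}$, which gives $\sum_s p_s^r\ge(\sum_s p_s^2)^{r-1}$. Only after that does it invoke Lemma~\ref{lemma:ties_prob_is_positive} to replace $\sum_s p_s^2$ by $1/m$.

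Your Jensen step with uniform weights on $t\mapsto t^r$ (or the equivalent H\"older step) reaches $m^{1-r}$ in one move. It makes the lemma independent of Lemma~\ref{lemma:ties_prob_is_positive} and gives the equality case immediately.

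The paper's route gives a sharper intermediate inequality. It says the $r$-fold coincidence probability is at least the $(r-1)$-th power of the pairwise collision probability, and this strictly exceeds $m^{1-r}$ whenever the $p_s$ are non-uniform.

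This also affects your closing remark. The consecutive coincidence events $\{\phi^{(i)}=\phi^{(i+1)}\}$ are indeed not independent, but they are positively correlated. So the naive ``multiply the pairwise bounds'' inequality $\sum_s p_s^r\ge(\sum_s p_s^2)^{r-1}$ is in fact true; it is exactly what the paper's weighted Jensen step proves. Avoiding Lemma~\ref{lemma:ties_prob_is_positive} is therefore a matter of convenience, not necessity.
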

\begin{proof}
Since $\phi^{(1)}, \ldots, \phi^{(r)}$ are i.i.d., each run independently converges to some local minimum with probability $p_s$ for $s = 1, \ldots, m$, so
\[
P(\phi^{(1)} = \ldots = \phi^{(r)}) = \sum_{s=1}^{m} p_s^r = \sum_{s=1}^{m} p_s \cdot p_s^{r-1}.
\]
This is a weighted average of $p_s^{r-1}$ with weights $p_s$, where $\sum_{s=1}^{m} p_s = 1$. Since $f(x) = x^{r-1}$ is convex for $r \geq 2$, Jensen's inequality gives
\[
\sum_{s=1}^{m} p_s \cdot p_s^{r-1} \geq \left(\sum_{s=1}^{m} p_s \cdot p_s\right)^{r-1} = \left(\sum_{s=1}^{m} p_s^2\right)^{r-1} \geq \left(\frac{1}{m}\right)^{r-1},
\]
where the last inequality follows since $\sum_{s=1}^{m} p_s^2 = 
P(\phi^{(i)} = \phi^{(j)}) \geq \frac{1}{m}$ by 
Lemma~\ref{lemma:ties_prob_is_positive}.
\end{proof}

We are now ready to bound the probability that a given restart constitutes an improvement.

\begin{theorem}    
\label{thm:prob_of_improvement}
The probability that run $r \geq 2$ constitutes an improvement satisfies
\[
    P\!\left(\phi^{(r)} < \min\!\left(\phi^{(1)}, \ldots, \phi^{(r-1)}\right)\right) \leq \frac{1}{r}\left(1 - \left(\frac{1}{m}\right)^{r-1}\right).
\]
\end{theorem}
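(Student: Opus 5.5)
We will prove the bound with a symmetry (exchangeability) argument on the $r$ i.i.d.\ values $\phi^{(1)}, \ldots, \phi^{(r)}$ supplied by Lemma~\ref{lemma:iid}, and combine it with the tie bound of Lemma~\ref{lemma:all_runs_equal}. Write $E_i$ for the event that run $i$ is the \emph{strict unique} minimiser among the first $r$ runs, that is, $\phi^{(i)} < \phi^{(j)}$ for all $j \neq i$, $j \leq r$. The improvement event at run $r$ is exactly $E_r$. Strict inequality against the minimum of the others is equivalent to strict inequality against each of them.

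\textbf{Step 1 (equal probabilities).} Since the $\phi^{(j)}$ are i.i.d., their joint law is invariant under permuting indices. Hence $P(E_1) = \cdots = P(E_r)$.

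\textbf{Step 2 (disjointness).} The events $E_1, \ldots, E_r$ are pairwise disjoint, because at most one index can be the strict unique minimiser.

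\textbf{Step 3 (excluding full ties).} Each $E_i$ is disjoint from the event $T = \{\phi^{(1)} = \cdots = \phi^{(r)}\}$. When all values coincide, no run is strictly smaller than the others; here $r \geq 2$ is needed so that there is at least one other run to compare against. Therefore $E_1, \ldots, E_r$ all lie inside the complement $T^c$.

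\textbf{Step 4 (combining).} From Steps 1--3,
\[
r\,P(E_r) = \sum_{i=1}^{r} P(E_i) = P\Big(\bigcup_{i=1}^r E_i\Big) \leq 1 - P(T).
\]
Lemma~\ref{lemma:all_runs_equal} gives $P(T) \geq (1/m)^{r-1}$. Dividing by $r$ then yields the claimed bound.

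The main point needing care is Step 1. The symmetry argument requires only exchangeability, which follows from the i.i.d.\ property, but we must make sure the events are defined with strict inequalities throughout so that ties are handled consistently. This is also why the result is an inequality rather than the equality $1/r$ one would get for continuous distributions. Ties other than the full tie $T$ only shrink the union further, so they can safely be discarded.
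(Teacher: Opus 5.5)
Your proof is correct and is essentially the paper's argument: the strict-minimiser events are disjoint and equiprobable by exchangeability, they exclude the full-tie event, and Lemma~\ref{lemma:all_runs_equal} bounds that event from below. The only difference is that the paper first writes $r\,P(E_r)$ as one minus the probability of a tie at the minimum, whereas you go straight to the containment $\bigcup_i E_i \subseteq T^c$, a slightly more direct route to the same inequality.
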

\begin{proof}
The events $\{\phi^{(i)}$ is the strict minimum of $\phi^{(1)}, \ldots, \phi^{(r)}\}$ for $i = 1, \ldots, r$ are mutually exclusive. By Lemma \ref{lemma:iid}, each has the same probability, so 
\[
r \cdot P\!\left(\phi^{(r)} < \min\!\left(\phi^{(1)}, \ldots, \phi^{(r-1)}\right)\right)
\]
 is the probability that exactly one of the \(r\) runs achieves the minimum. Its complement is the probability that at least two of the \(r\) runs tie at the minimum. Hence,
\[
r \cdot P\!\left(\phi^{(r)} < \min\!\left(\phi^{(1)}, \ldots, \phi^{(r-1)}\right)\right) = 1-P\!\left(\exists\, i \neq j : \phi^{(i)} = \phi^{(j)} = \min\!\left(\phi^{(1)}, \ldots, \phi^{(r)}\right)\right).
\]
If all $r$ runs return the same value, all $r$ runs tie at the minimum, so
\[
P\!\left(\exists\, i \neq j : \phi^{(i)} = \phi^{(j)} = \min\!\left(\phi^{(1)}, \ldots, \phi^{(r)}\right)\right) \geq P(\phi^{(1)} = \ldots = \phi^{(r)}) \geq \left(\frac{1}{m}\right)^{r-1},
\]
where the last inequality uses Lemma~\ref{lemma:all_runs_equal}. Hence,
\[
    P\!\left(\phi^{(r)} < \min\!\left(\phi^{(1)}, \ldots, \phi^{(r-1)}
    \right)\right) \leq \frac{1}{r}\left(1 - \left(\frac{1}{m}
    \right)^{r-1}\right).
\]
\end{proof}

This bound has an immediate consequence for the expected number of improving restarts.

\begin{cor}
\label{cor:n_r}
Let $N_r$ denote the number of improvements in $r$ independent runs, with $r \geq 2$. Then
\[
\mathbb{E}[N_r] \leq 1 + \sum_{i=2}^{r} \frac{1}{i}\left(1 - \left(\frac{1}{m}\right)^{i-1}\right).
\]
\end{cor}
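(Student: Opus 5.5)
We write $N_r$ as a sum of indicator variables and use linearity of expectation. For $i = 1, \ldots, r$, let $I_i$ be the indicator of the event that run $i$ is an improvement in the sense of the Definition. Then
\[
N_r = \sum_{i=1}^{r} I_i, \qquad \mathbb{E}[N_r] = \sum_{i=1}^{r} P(I_i = 1).
\]
Linearity needs no independence among the $I_i$. This matters, because the improvement events are in general dependent; they are functions of overlapping prefixes of the i.i.d.\ sequence.

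The first term is handled by convention. Run $1$ is always an improvement, so $P(I_1 = 1) = 1$, which gives the leading $1$ in the bound.

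For each $i$ with $2 \le i \le r$, we apply Theorem~\ref{thm:prob_of_improvement} with $r$ replaced by $i$. The theorem applies to any prefix of length $i \geq 2$: Lemma~\ref{lemma:iid} guarantees that $\phi^{(1)}, \ldots, \phi^{(i)}$ are i.i.d., and the event ``run $i$ is an improvement'' depends only on these first $i$ runs. Hence
\[
P(I_i = 1) \le \frac{1}{i}\left(1 - \left(\frac{1}{m}\right)^{i-1}\right),
\]
and summing these bounds over $i = 2, \ldots, r$ yields the stated inequality.

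There is no real obstacle here. The only point needing care is to state explicitly that Theorem~\ref{thm:prob_of_improvement} holds for every prefix length $i \ge 2$, since $m$ depends only on $(\mathcal{X},k)$ and not on the number of runs. We could also remark that, because each summand is at most $1/i$, the bound is dominated by the harmonic number $H_r$, which recovers the classical expected count of records for continuous i.i.d.\ sequences.
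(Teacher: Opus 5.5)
Your proposal is correct and matches the paper's proof. Both write $N_r$ as a sum of improvement indicators, count the $i=1$ term as $1$ by convention, and bound each remaining term by Theorem~\ref{thm:prob_of_improvement} applied with $r$ replaced by $i$, using linearity of expectation. Your explicit remark that the theorem applies to every prefix of length $i \geq 2$ (since $m$ depends only on $(\mathcal{X},k)$) is a small clarification the paper leaves implicit.
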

\begin{proof}
We can write $N_r = \sum_{i=1}^r \mathbf{1}\!\left[\phi^{(i)} < \min\!\left(\phi^{(1)},\ldots,\phi^{(i-1)}\right)\right]$, where the term for $i=1$ equals $1$ by convention. By linearity of expectation and Theorem~\ref{thm:prob_of_improvement},
\[
\mathbb{E}[N_r] = 1 + \sum_{i=2}^{r} P\!\left(\phi^{(i)} < \min\!\left(\phi^{(1)},\ldots,\phi^{(i-1)}\right)\right) \leq  1 + \sum_{i=2}^{r} \frac{1}{i}\left(1 - \left(\frac{1}{m}\right)^{i-1}\right) . 
\]
\end{proof}

Since $\sum_{i=2}^{r} 1/i$ grows like $\ln r$, the bound in Corollary~\ref{cor:n_r} shows that the expected number of improving restarts grows only logarithmically in $r$, even though $r$ itself grows without bound. In practice, this means that after a handful of runs, additional restarts become increasingly unlikely to improve on the best partition found so far. This diminishing-returns behaviour is precisely what motivates a stopping rule: rather than fixing $r$ in advance, we would like to detect from the data itself the point at which further restarts are no longer worth their computational cost.

The bound in Theorem~\ref{thm:prob_of_improvement} also constrains how quickly improvements can accumulate over many runs. This has a direct practical use: it tells us how many runs must pass before another improvement can reasonably be expected, which we use to set a minimum number of runs before the stopping criterion is checked at all.

\begin{theorem}
\label{thm:improvement_expectation_lb}
Let us assume that the global minimum of \eqref{eq:kmeans} is not one of $\phi^{(1)}, \ldots, \phi^{(r)}$. Then, if $r'$ is the smallest integer greater than $r$ such that $\mathbb{E}[N_{r'}] - \mathbb{E}[N_r] \geq 1$, we have that  $r' >e \cdot r$.
\end{theorem}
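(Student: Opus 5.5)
The plan is to pass from expectations to the per-run improvement probabilities and then use the bound of Theorem~\ref{thm:prob_of_improvement} together with an integral comparison for the harmonic sum. By linearity of expectation, exactly as in the proof of Corollary~\ref{cor:n_r}, for any $r' > r$ we have
\[
\mathbb{E}[N_{r'}] - \mathbb{E}[N_r] = \sum_{i=r+1}^{r'} P\!\left(\phi^{(i)} < \min\!\left(\phi^{(1)},\ldots,\phi^{(i-1)}\right)\right).
\]
Since every index here satisfies $i \geq r+1 \geq 2$, Theorem~\ref{thm:prob_of_improvement} applies to each term. Dropping the nonnegative correction $(1/m)^{i-1}$ then gives
\[
\mathbb{E}[N_{r'}] - \mathbb{E}[N_r] \leq \sum_{i=r+1}^{r'} \frac{1}{i}\left(1-\left(\frac{1}{m}\right)^{i-1}\right) < \sum_{i=r+1}^{r'} \frac{1}{i}.
\]
The inequality is strict whenever the range is nonempty, because $(1/m)^{i-1} > 0$.

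Next I would bound the harmonic tail by an integral. Since $1/i \leq \int_{i-1}^{i} dt/t$, we get $\sum_{i=r+1}^{r'} 1/i \leq \ln(r'/r)$. Combining the two displays, whenever $\mathbb{E}[N_{r'}] - \mathbb{E}[N_r] \geq 1$ we obtain $1 < \ln(r'/r)$, that is, $r' > e\cdot r$. This conclusion holds for every admissible $r'$, and in particular for the smallest one, which is what the statement asserts. If no such $r'$ exists, the statement is vacuous.

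The main obstacle is interpreting the hypothesis that the global minimum is not among $\phi^{(1)},\ldots,\phi^{(r)}$. The bound argument above does not actually use it; it only ensures the situation is non-degenerate, namely that a further improvement is still possible, so that ``expecting another improvement'' is meaningful. I would remark on this explicitly rather than try to exploit it. If one instead reads the expectations as conditional on this event, the i.i.d.\ structure of Lemma~\ref{lemma:iid} is broken for the first $r$ runs. That would complicate the application of Theorem~\ref{thm:prob_of_improvement}, so I would keep the unconditional reading, consistent with how $\mathbb{E}[N_r]$ is defined in Corollary~\ref{cor:n_r}.

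A secondary point to check is the strictness of $r' > e r$. Strictness comes for free here, since the harmonic-sum inequality is strict, and in any case $e r$ is irrational for every integer $r \geq 1$, so $r' = e r$ cannot occur.
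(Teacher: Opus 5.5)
Your proof is correct and follows the paper's argument essentially step for step. It uses linearity of expectation, then Theorem~\ref{thm:prob_of_improvement} with the positive term $(1/m)^{i-1}$ dropped to get a strict inequality against $\sum_{i=r+1}^{r'} 1/i$, then the comparison of this harmonic tail with $\ln(r'/r)$; your remark that neither the hypothesis on the global minimum nor the minimality of $r'$ is used applies equally to the paper's own proof.
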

\begin{proof}
By definition, 
\[
N_{r'} - N_r = \sum_{i=r+1}^{r'} \mathbf{1}\!\left[\phi^{(i)} < \min(\phi^{(1)},\ldots,\phi^{(i-1)})\right].
\]
By linearity of expectation and Theorem~\ref{thm:prob_of_improvement},
\[
\mathbb{E}[N_{r'}] - \mathbb{E}[N_r] = \sum_{i=r+1}^{r'} P\!\left(\phi^{(i)} < \min(\phi^{(1)},\ldots,\phi^{(i-1)})\right) 
\leq \sum_{i=r+1}^{r'} \frac{1}{i}\left(1-\left(\frac{1}{m} \right)^{i-1}\right).
\]
Since $(1/m)^{i-1} > 0$ we have that
\[
 1- \left(\frac{1}{m}\right)^{i-1} < 1 \Rightarrow \frac{1}{i}\left(1-\left(\frac{1}{m} \right)^{i-1}\right) <\frac{1}{i}.
\]
Summing over $i=r+1,\ldots,r'$, and noting that $\sum_{i=r+1}^{r'} \frac{1}{i} < \ln(r'/r)$, the sum is strictly less than $\ln(r'/r)$. Reaching $1$ therefore requires $\ln(r'/r) > 1$, thus $r' > e \cdot r$.
\end{proof}

\subsection{Algorithm}

We now turn the results of Section~\ref{sec:theoretical_framework} into a practical stopping criterion. Our criterion requires an upper bound on $P(\phi^{(r)} < \min(\phi^{(1)}, \ldots, \phi^{(r-1)}))$ that can be computed from the runs observed so far, small enough once improvement becomes unlikely, and reliable throughout. No single estimator meets all three requirements on its own. A purely statistical estimate can be noisy or misleading when few runs have been observed, while a bound that is valid unconditionally is typically far looser than necessary once more information becomes available. We therefore combine three distinct bounds on this probability, each valid under different circumstances, and take their minimum at every run. 

The first of the three bounds is the Good--Turing missing-mass estimate introduced in Section~\ref{sec:good_turing_estimation}. Since an improvement requires run $r$ to be strictly smaller than all previous runs, it must return a value of $\phi$ not seen before. Hence,
\[
P\!\left(\phi^{(r)} < \min\!\left(\phi^{(1)}, \ldots, \phi^{(r-1)}\right) \right) \leq P\!\left(\phi^{(r)} \notin \{\phi^{(i)} \mid i = 1, \ldots, r-1\}\right),
\]
where the right-hand side is the probability that the next run yields a previously unseen value of $\phi$. Writing $\phi_1$ for the number of distinct values of $\phi$ that appear exactly once among $\phi^{(1)}, \ldots, \phi^{(r)}$, the Good--Turing estimate of the missing mass after $r$ runs is
\[
\widehat{P}\!\left(\phi^{(r)} \notin \{\phi^{(i)}\}\right) = \frac{\phi_1}{r}.
\]
As discussed in Section~\ref{sec:good_turing_estimation}, this estimate is subject to sampling variability that can be pronounced when $r$ is small.

Our second bound, unlike the Good--Turing estimate, requires no statistical estimation and holds unconditionally, providing a reliable fallback when that estimate may not be trustworthy. It follows directly from Theorem~\ref{thm:prob_of_improvement}. Given $(1/m)^{r-1} \geq 0$,
\[
P\!\left(\phi^{(r)} < \min\!\left(\phi^{(1)}, \ldots, \phi^{(r-1)}
\right)\right) \leq \frac{1}{r}.
\]

Neither of the first two bounds uses information about which specific partition is currently best. Our third, this time empirically motivated, bound incorporates this information by analogy to the argument used in Theorem~\ref{thm:prob_of_improvement}. Treating the currently best-observed partition as if its probability $p^\star$ were fixed and known, and substituting the probability that at least two of $r$ runs land on it, $1 - (1-p^\star)^r - r\,p^\star(1-p^\star)^{r-1}$, in place of the term $(1/m)^{r-1}$ used in that theorem's proof, gives
\[
P\!\left(\phi^{(r)} < \min\!\left(\phi^{(1)}, \ldots, \phi^{(r-1)}\right)\right) \leq \frac{(1-p^\star)^r + r\, p^\star(1-p^\star)^{r-1}}{r}.
\]

This substitution is coherent because the proof of Theorem~\ref{thm:prob_of_improvement} rests on the identity 
\[
r \cdot P(\phi^{(r)} < \min(\phi^{(1)}, \ldots, \phi^{(r-1)})) = 1 - P(\text{at least two runs tie at the minimum}),
\]
and any lower bound on the tie probability yields an upper bound on the probability of improvement. The term $(1/m)^{r-1}$ lower-bounds the tie probability through the event that all $r$ runs agree. The event that at least two runs land on the currently best-observed partition is likewise sufficient for a tie, and since each run lands on it independently with probability $p^\star$ by Lemma~\ref{lemma:iid}, the number of such runs is binomial, giving this event probability 
\[
1 - (1-p^\star)^r - r\,p^\star(1-p^\star)^{r-1}.
\]
We estimate $p^\star$ from the data using a Clopper--Pearson lower confidence bound $p_{lo}$, treating the identity of the best-observed partition as fixed. This is an approximation rather than a rigorous guarantee, since that partition is itself selected from the same runs used to estimate $p^\star$.

Since each of the three quantities upper-bounds $P(\phi^{(r)} < \min(\phi^{(1)}, \ldots, \phi^{(r-1)}))$, so does their
minimum, which is simply the most informative of the three at each run. The algorithm evaluates all three bounds at each run and halts once their minimum falls below $\varepsilon/2$. The tolerance $\varepsilon$ is evenly divided between the two sources of error in the stopping decision. These are the estimated probability that a further restart improves on the best partition found so far (at most $\varepsilon/2$), and the failure of the Clopper--Pearson bound itself, which is computed at confidence level $1 - \varepsilon/2$. Informally, by a union bound, the probability that the algorithm stops without justification is at most $\varepsilon$. For instance, setting $\varepsilon = 0.1$ means accepting at most a $10\%$ chance that a stop was premature. The number of restarts is thus a justified, reportable quantity rather than an arbitrary choice, and $\varepsilon$ a parameter the user can reason about directly.

Since the three bounds are inevitably crude if the number of restarts is too low, we evaluate them after a minimum of $r_{\min}$ restarts. Theorem~\ref{thm:improvement_expectation_lb} indicates how to set this minimum. Taking $r = 2$, the smallest value for which it applies, gives $r' > 2e \approx 5.44$, so no further improvement is expected before restart $6$. We therefore set $r_{\min} = 6$. Algorithm~\ref{alg:gtkmeanspp} formalises the steps.

\begin{algorithm}
\caption{Good--Turing Restart Criterion (GTRC) for $k$-means++}
\label{alg:gtkmeanspp}
\begin{algorithmic}[1]
\Require data set $\mathcal{X}$, cluster count $k$, tolerance $\varepsilon \in (0,1)$.
\Ensure best partition found and its cost $\phi^\star$.
\State Initialise $\phi^\star \gets \infty$, $r \gets 0$, and the multiset $\mathcal{V} \gets \emptyset$.
\Repeat
    \State $r \gets r + 1$.
    \State run $k$-means++ on $\mathcal{X}$ with a fresh random initialisation; let $\phi^{(r)}$ be its cost.
    \State add $\phi^{(r)}$ to $\mathcal{V}$.
    \If{$\phi^{(r)} < \phi^\star$}
        \State $\phi^\star \gets \phi^{(r)}$, and store the corresponding partition.
    \EndIf
    \If{$r \geq 6$}
        \State $\phi_1 \gets$ number of values occurring exactly once in $\mathcal{V}$.
        \State $c^\star \gets$ number of occurrences of the best partition in $\mathcal{V}$.
        \State $p_{lo} \gets$ Clopper--Pearson lower confidence bound on $p^\star$ at level $1-\varepsilon/2$, given $c^\star$ successes in $r$ trials.
        \State $U \gets \min\left(\dfrac{\phi_1}{r},\ \dfrac{1}{r},\ \dfrac{(1-p_{lo})^r + r\,p_{lo}(1-p_{lo})^{r-1}}{r}\right)$.
    \EndIf
\Until{$r \geq 6$ \textbf{and} $U \leq \varepsilon/2$}
\State \Return stored partition and $\phi^\star$.
\end{algorithmic}
\end{algorithm}

\section{Experimental Setup}
\label{sec:setting}

We evaluated the Good--Turing restart criterion (GTRC) on 34 publicly available data sets (for details see Table~\ref{tab:datasets}), downloaded from the popular UCI Machine Learning Repository~\cite{uci-ml-repo}. For each data set, the number of clusters $k$ was set according to its ground truth. Categorical features were one-hot encoded prior to standardisation. Each data set was then standardised feature-wise following
\[
x_{iv} = \frac{x_{iv} - \bar{x}_{v}}{\max x_v - \min x_v},
\]
where $\bar{x}_{v}$ is the average of feature $v$ over all $x_i \in \mathcal{X}$.

For each data set and each tolerance $\varepsilon \in \{0.05, 0.1\}$, we ran GTRC 200 times. For each run, we recorded the final objective, the number of restarts used, which of the three stopping bounds (Good-Turing, unconditional, and confidence) were satisfied at the point of stopping, the number of distinct partitions observed, and $p_{lo}$, the Clopper-Pearson lower confidence bound on $p^*$ at the point of stopping.

We compared our GTRC against four other methods. First, the standard $k$-means++ with a fixed number of restarts in the set $ \{10, 20, 50, 100\}$, we ran 200 times each, covering the range of restart counts commonly used in practice. All runs used the same $k$-means++ implementation, differing only in how the number of restarts was determined. To compare GTRC against each fixed restart count, we used a Wilcoxon signed rank test on the paired objective values across the 200 runs, with objectives rounded to two decimal places. Where the two sets of rounded objectives were identical, no test was performed and the two methods were treated as equivalent.

Second, Hart's rule. For this we set $\beta = \epsilon$, so that the rule's failure tolerance matched that of GTRC. We fixed $\delta = 0.4$, the value used throughout Hart's original experiments \cite{hart1998sequential}. Third, Ohsaki and Yamakawa's rule. We set its stopping threshold equal to GTRC's $\epsilon$. In our experiments, we found this rule can fail to converge on real data. A partition found only once contributes a term to the likelihood ratio that approaches $e^{-1} \approx 0.368$ as $r$ grows, rather than decaying to zero, so data sets that repeatedly produce rarely-recurring partitions can prevent the ratio from ever falling below the threshold. We therefore impose a cap of $1000$ restarts, beyond which the rule is deemed not to have converged, and we record whether each run converged within this cap. Fourth, Corominas's rule. We set $\alpha = \epsilon$, so that its failure tolerance matches the tolerance used throughout. We also fixed $\epsilon_C = 0.01$, the value used in the original experiments \cite{corominas2023deciding}. Since this rule computes a fixed number of restarts in advance rather than adapting to the run in progress, the resulting restart count does not vary across data sets for a given $\epsilon$.

\begin{table}[t]
\centering
\caption{Data sets used in the experiments. $n$ is the number of data points, $d$ the number of features (after one-hot encoding), and $k$ the number of clusters, which we set to the number of ground truth classes.}
\label{tab:datasets}
\footnotesize
{\setlength{\tabcolsep}{4pt}
\begin{tabular}{lrrrlrrr}
\toprule
Data set & $n$ & $d$ & $k$&Data set & $n$ & $d$ & $k$\\
\midrule
Acute Inflammations       &120&11&4&Musk                      &476&166&2\\
Australian Credit         &690&42 &2 &Online News Pop           &39,644&58&6\\
Balance                   &625&20&2&Page Blocks             &5,473&10&5\\
Breast Cancer             &699&9&2&Parkinsons                &195&22&2\\
Car Evaluation            &1728&21&4&Pen Digits                 &10,992&16&2\\
Cover Type                &581,012&54&7&Sensorless Drive           &58,509&48&11\\
Ecoli                     &336&7&8&Skin Segmentation          &245,057&3&2\\
Glass                     &214&9&6&Soya                    &47&72&4\\
HTRU2                     &17,898&8&2&Spambase               &4,601&57&2\\
Hand Postures             &78,095&36&5&Tamilnadu                 &45,781&22&31\\
Heart                     &270&25&2&Teaching Assistant    &151&56&3\\
IDA2016                   &76,000&169&2&Tic Tac Toe               &958&27&2\\
Ionosphere                &351&33&2&Tulugu Vowels              &871&3&6\\
Iris                      &150&4&3&Wall Following Robot  &5,456&24&4\\
Isolet                    &7,797&617&26&Wine                     &178&13&3\\
Landsat                   &6,435&7&6&Wine Quality          &6,497&11&7\\
Letter Recognition        &20,000&16&26&Zoo                      &101&16&7\\

\bottomrule
\end{tabular}
}
\end{table}
\section{Results and discussion}
\label{sec:results}

Here, we evaluated GTRC against fixed-restart $k$-means++, Hart's rule, Ohsaki and Yamakawa's rule, and Corominas's rule across 34 data sets. Choosing a fixed number of restarts for $k$-means++ trades the number of restarts against clustering quality. That is, more restarts tend to produce a lower objective but at a higher computational cost. Figure~\ref{fig:median_gap} summarises this trade-off across the full benchmark using medians rather than means, since the mean gap may be strongly influenced by a small number of data sets. The gap plotted for each method is measured against the best objective observed. At $\varepsilon = 0.05$, GTRC reached a median gap of approximately $0.002\%$ to this reference, with only a median of 32 restarts. This indicates that GTRC identifies a restart count sufficient to reach a near-optimal partition without restarting $k$-means++ more than necessary. At $\varepsilon = 0.1$, GTRC used a median of 19 restarts and reached a gap of approximately $0.009\%$, a restart count and quality trade-off directly interpretable from the chosen tolerance.

\begin{figure}[t]
\centering
\begin{minipage}{0.49\textwidth}
\centering
\includegraphics[width=\textwidth]{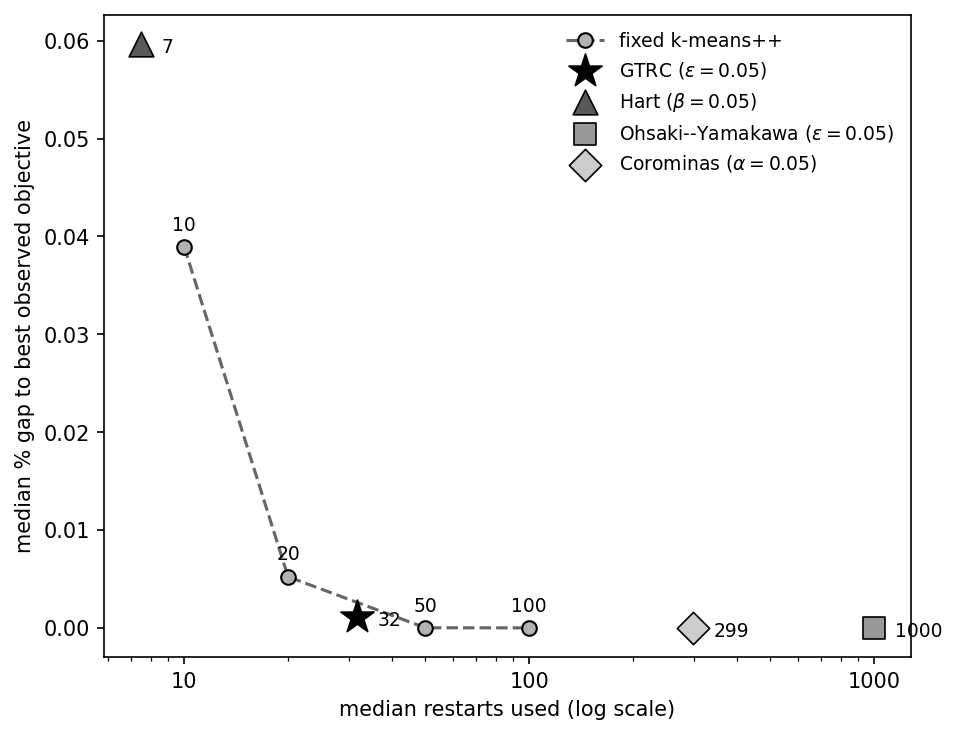}
\end{minipage}
\hfill
\begin{minipage}{0.49\textwidth}
\centering
\includegraphics[width=\textwidth]{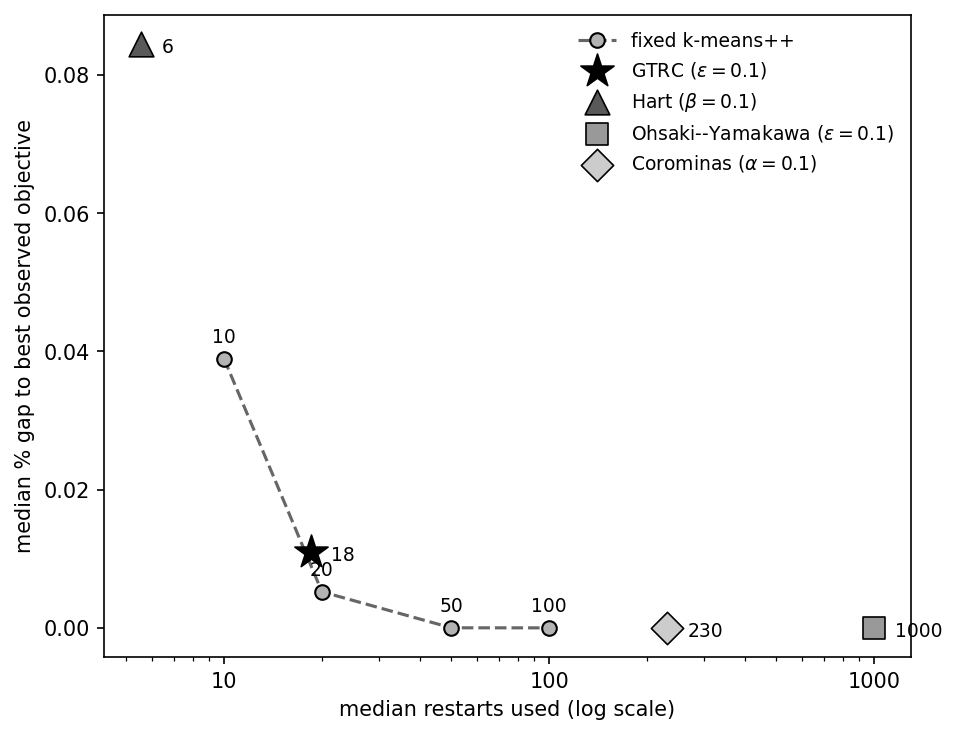}
\end{minipage}
\caption{Median restarts and median \% gap to the best observed objective across 34 data sets, for fixed $k$-means++, GTRC, Hart's rule, Ohsaki and Yamakawa's rule, and Corominas's rule,  each set to a tolerance of $0.05$ (left) and $0.1$ (right).}
\label{fig:median_gap}
\end{figure}

Hart's rule uses far fewer restarts than GTRC, a median of 7 at $\varepsilon = 0.05$ and 6 at $\varepsilon = 0.1$, but at a real cost to clustering quality. GTRC reached a lower gap than Hart's rule on 25 of 34 data sets at $\varepsilon = 0.05$ and on 31 of 34 at $\varepsilon = 0.1$, with Hart's median gap ($0.061\%$ and $0.086\%$ respectively) an order of magnitude larger than that of GTRC. Hart's rule is therefore cheaper, but the restarts it forgoes lead to a considerably worse objective.

Ohsaki and Yamakawa's rule and Corominas's rule reached a median gap indistinguishable from zero, but only by using far more restarts than GTRC. Corominas's rule used a fixed 299 restarts at $\varepsilon = 0.05$ and 230 at $\varepsilon = 0.1$ on every data set regardless of its difficulty, an order of magnitude more than GTRC's median. Ohsaki and Yamakawa's rule used a median of 1,000 restarts at both tolerances, the cap we imposed on it, because it did not satisfy its own stopping criterion within this cap in all 200 repetitions on 19 of the 34 data sets, and in some repetitions on a further 12. This is consistent with the diminishing-returns behaviour established in Section~\ref{sec:theoretical_framework}. That is, once a data set is easy enough that a handful of restarts already reach a near-optimal partition, spending tens or hundreds of times more restarts can close only a negligible remaining gap. GTRC is designed to detect this point directly, rather than to keep searching for the small residual improvement that a much larger restart budget may still find.

Figure~\ref{fig:restart_spread} shows the number of restarts used by GTRC at $\varepsilon = 0.05$, sorted across the 34 data sets. The distribution is far from uniform. Restart counts range from 6, on data sets where a single local minimum is reached reliably, to the ceiling of 40 restarts implied by $\varepsilon = 0.05$ and Theorem~\ref{thm:prob_of_improvement}, reached on 14 data sets. This spread confirms that GTRC does not converge on a single typical restart count. Rather, the number of restarts it uses is set by the difficulty of each data set, which is precisely the information a fixed restart count cannot take into account.

\begin{figure}[t]
\centering
\includegraphics[width=0.6\textwidth]{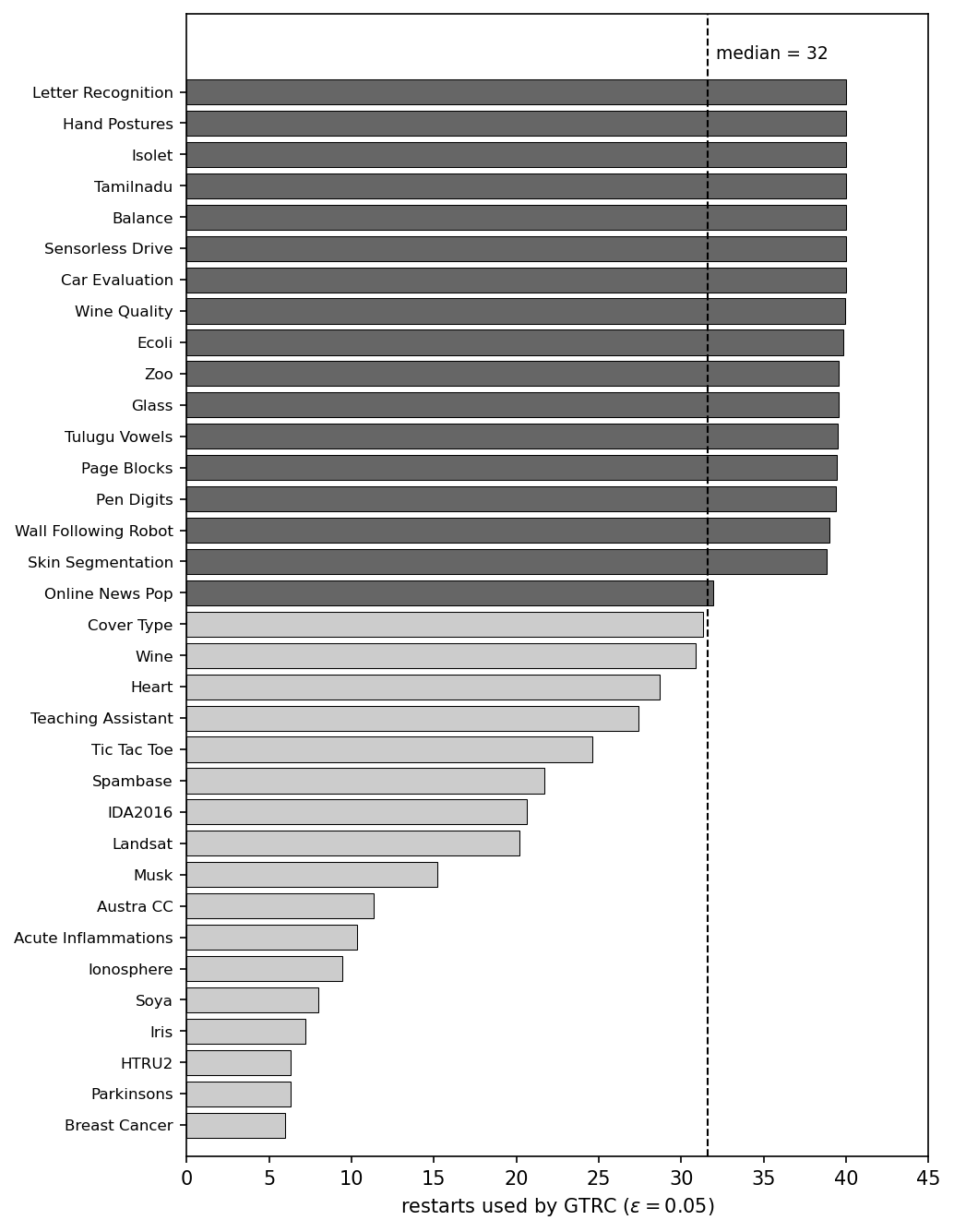}
\caption{Restarts used by GTRC ($\varepsilon = 0.05$) across 34 data sets, sorted, with the median (32) marked.}
\label{fig:restart_spread}
\end{figure}

Figure~\ref{fig:p_lo_vs_runs} shows how the number of restarts used by GTRC relates to $p_{lo}$, the lower confidence bound on $p^*$ (the true probability that a single restart of $k$-means++ converges to the partition currently regarded as best). Across the 34 data sets, the two quantities are almost perfectly monotonically related, with a Spearman correlation of
$\rho = -0.99$ at $\varepsilon = 0.05$ and $\rho = -0.98$ at $\varepsilon = 0.1$. Data sets for which $p_{lo}$ is high are stopped after only a handful of restarts, while data sets for which $p_{lo}$ remains low are run until close to the ceiling on restarts implied by $\varepsilon$. This further confirms that the restart count chosen by GTRC is not arbitrary, but governed by how quickly the algorithm gains confidence in its own result.

\begin{figure}[t]
\centering
\begin{minipage}{0.48\textwidth}
\centering
\includegraphics[width=\textwidth]{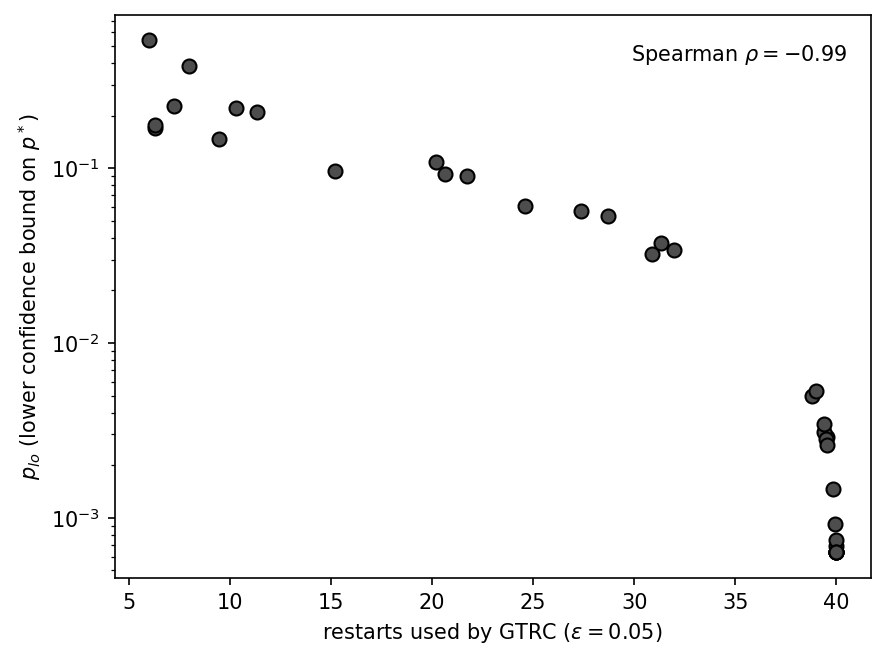}
\end{minipage}
\hfill
\begin{minipage}{0.48\textwidth}
\centering
\includegraphics[width=\textwidth]{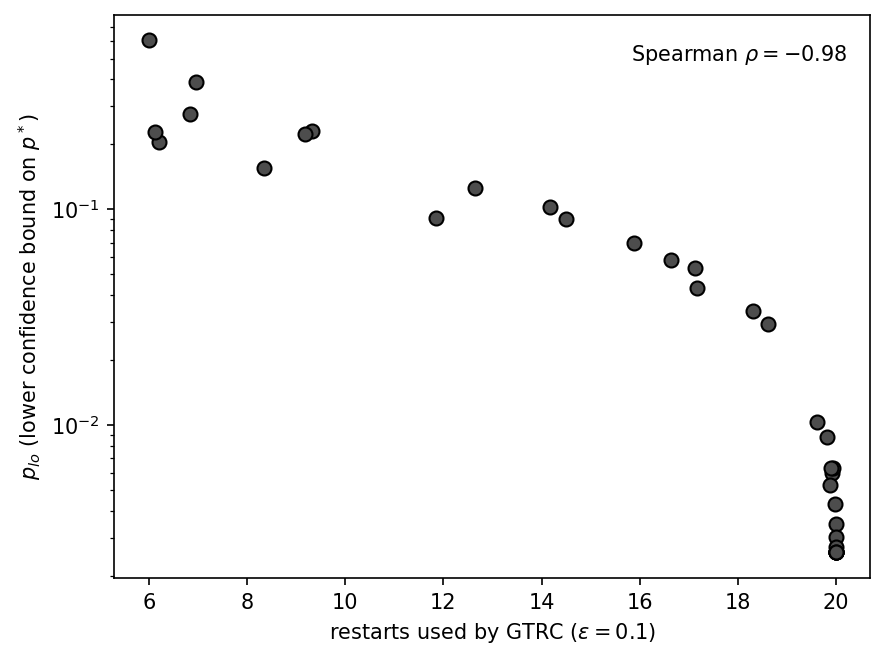}
\end{minipage}
\caption{Restarts used by GTRC against $p_{lo}$, a lower bound on how likely a single restart is to reach the best partition found so far. GTRC stops earlier when this probability is high, confirming that its stopping decision tracks how quickly it becomes confident in its result.}
\label{fig:p_lo_vs_runs}
\end{figure}

Figure~\ref{fig:bound_rates} reports how often each of the three bounds was satisfied ($\leq \varepsilon/2$) at the point GTRC stopped, averaged across the 34 data sets. The confidence bound was satisfied most often (86--87\% on average), followed by the unconditional bound (43--57\%) and the Good-Turing bound (18--19\%). The confidence bound is algebraically bounded below by the unconditional bound, so in this benchmark it is always at least as tight, and the two are frequently satisfied together on the harder data sets. Despite this, the unconditional bound is retained in the stopping rule. Unlike the confidence bound, which relies on an estimated quantity, the unconditional bound holds regardless of how that estimate behaves, and so continues to guarantee termination even in cases where the confidence bound might not be reliable.

\begin{figure}[t]
\centering
\includegraphics[width=0.6\textwidth]{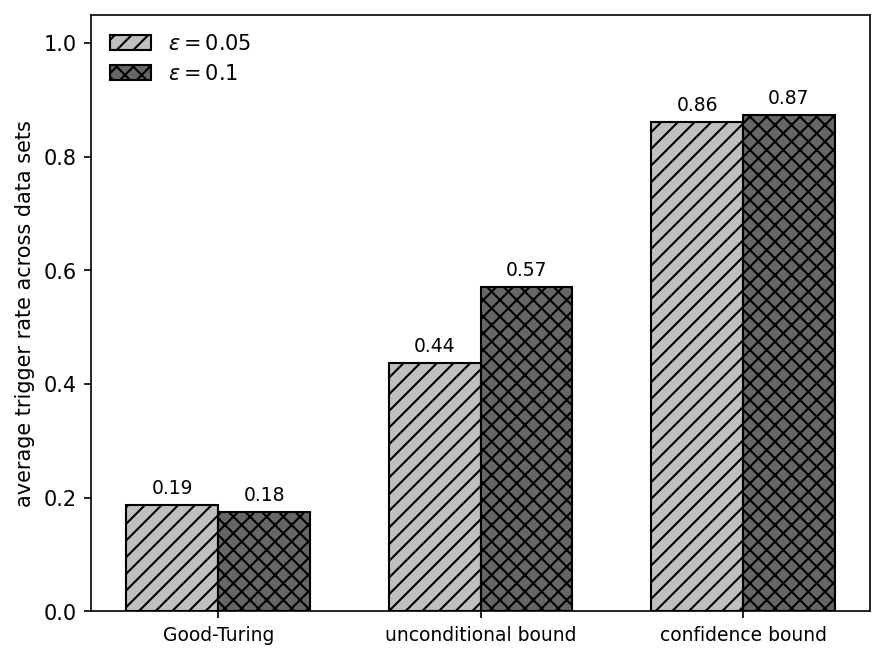}
\caption{Average rate at which each stopping bound was itself satisfied at the point GTRC stopped, across 34 data sets.}
\label{fig:bound_rates}
\end{figure}

Taken together, these results support the central claim of this paper. GTRC reaches clustering quality competitive with well-chosen fixed restart counts, without requiring that count to be specified in advance for each data set. This is not the result of a single restart count that happens to work well on average. Rather, it follows from a stopping decision that responds to how quickly the algorithm becomes confident in its own result, a signal that varies substantially across data sets and that a fixed restart count cannot account for. 

The comparison against Hart's rule, Ohsaki and Yamakawa's rule, and Corominas's rule places this result in context. Hart's rule uses fewer restarts than GTRC, but at a real cost to clustering quality on most data sets. Ohsaki and Yamakawa's rule and Corominas's rule reach a marginally lower gap than GTRC, but do so using an order of magnitude more restarts, a fixed count in the case of Corominas's rule regardless of data set difficulty, and a count large enough that Ohsaki and Yamakawa's rule frequently failed to satisfy its own stopping criterion within it. None of the three alternatives considered adapts its restart count to data set difficulty in the way GTRC does, and none offers the same balance between clustering quality and restart cost across data sets of varying difficulty.

\section{Conclusion}

\label{sec:conclusion}

This paper introduces GTRC, a stopping criterion for $k$-means++ restarts that determines, from the data itself, how many restarts are needed rather than requiring this number to be fixed in advance. Unlike the common practice of fixing a restart count by convention and applying it uniformly regardless of data set difficulty, GTRC combines three bounds on the probability that a further restart would improve on the current result, and stops once this probability falls below a user-specified tolerance $\varepsilon$. Our experiments on 34 real-world data sets show that GTRC identifies the point beyond which further $k$-means++ restarts yield only negligible improvement in terms of objective. This restart count was shown to be governed by $p_{lo}$, a measure of how quickly the algorithm becomes confident in its own result, rather than by an arbitrary or fixed rule.

We further compared GTRC against three existing stopping rules for multistart local search. Hart's rule used fewer restarts than GTRC but at a real cost to clustering quality on most data sets. Ohsaki and Yamakawa's rule and Corominas's rule reached a marginally lower gap than GTRC, but only by using an order of magnitude more restarts, a fixed count regardless of data set difficulty in the case of Corominas's rule, and a count large enough that Ohsaki and Yamakawa's rule frequently failed to satisfy its own stopping criterion within it. None of the three alternatives adapts its restart count to data set difficulty in the way GTRC does.

More broadly, this work addresses a gap in how clustering restarts are reported and compared. Rather than treating the number of restarts as an arbitrary hyperparameter, GTRC offers a principled, interpretable, and data-dependent alternative, with $\varepsilon$ as the only parameter a user needs to set and reason about.

\bibliography{references}

\end{document}